\documentclass{article}

\PassOptionsToPackage{numbers,compress}{natbib}
 \usepackage[preprint]{neurips_2026}

\usepackage{enumitem}
\usepackage[utf8]{inputenc} 
\usepackage[T1]{fontenc}    
\usepackage{hyperref}       
\usepackage{url}            
\usepackage{booktabs}       
\usepackage{amsfonts}       
\usepackage{nicefrac}       
\usepackage{microtype}      
\usepackage{xcolor}         
\usepackage{todonotes}
\usepackage{amsmath} 
\usepackage{xspace}
\usepackage{multirow}
\usepackage{graphicx}
\usepackage{cleveref}
\usepackage{gensymb}   
\usepackage{amssymb}
\usepackage{wrapfig}
\usepackage{booktabs}
\usepackage{longtable} 

\usepackage{amsthm}   

\newtheorem{definition}{Definition}
\newtheorem{proposition}{Proposition}[section]

\newtoggle{final}
\toggletrue{final}
\iftoggle{final}{}{\setlength{\marginparwidth}{2cm}}
\newcommand{\hjs}[1]{\iftoggle{final}{#1}{{\color{purple} #1}}}

\newcommand{\ours}{DSSP\xspace}
\title{DSSP: Diffusion State Space Policy \\ with Full-History Encoding}

\author{%
  Zhiyuan Guan\textsuperscript{1} \quad Jianshu Hu\textsuperscript{1} \quad Han Fang\textsuperscript{1} \quad Yunpeng Jiang\textsuperscript{1} \\
  \textbf{Yize Huang\textsuperscript{1} \quad Shujia Li\textsuperscript{1} \quad Xiao Li\textsuperscript{1} \quad Yutong Ban\textsuperscript{1}\thanks{Corresponding author.}} \\
  \textsuperscript{1}Shanghai Jiao Tong University
}

\begin{document}

\maketitle

\begin{abstract}
Diffusion-based imitation learning has shown strong promise for robot manipulation. However, most existing policies condition only on the current observation or a short window of recent observations, limiting their ability to resolve history-dependent ambiguities in long-horizon tasks.
To address this, we introduce \ours, a history-conditioned Diffusion State Space Policy that enables efficient, full-history conditioning for robot manipulation.
Leveraging the continuous sequence modeling properties of State Space Models (SSMs), our history encoder effectively compresses the entire observation stream into a compact context representation.
To ensure this context preserves critical information regarding future state evolution, the encoder is optimized with a dynamics-aware auxiliary training objective.
This high-level context representation is then seamlessly fused with recent state observations to form a hierarchical conditioning mechanism for action generation. Furthermore, to maintain architectural consistency and minimize GPU memory overhead, we also instantiate the diffusion backbone itself using an SSM.
Extensive experiments across simulation benchmarks and real-world manipulation tasks show that \ours achieves state-of-the-art performance with a significantly smaller model size,
demonstrating superior efficiency of the hierarchical conditioning in capturing crucial information as the history length increases.

\end{abstract}

\section{Introduction}

Deploying robots in complex, unstructured environments requires policies capable of reasoning over multi-step tasks and high-dimensional sensory inputs.
Imitation learning~\citep{levine_end--end_2016,zeng_transporter_2022,zhao_learning_2023} has emerged as a central paradigm for acquiring such skills from expert demonstrations.
Among recent imitation learning approaches, diffusion policies~\citep{chi_diffusion_2024} have shown strong potential for modeling complex and multi-modal action distributions.
Recent works further improve diffusion-based policies through expressive denoising backbones~\citep{yan_maniflow_2025}, multi-modal conditioning representations~\citep{ze_3d_2024,lu_hmathbf3dp_2025}, and improved action generation formulations~\citep{zhang2025flowpolicy,sheng_mp1_2025}.
However, most policies still rely on short-context conditioning, predicting future actions from the current observation or a short observation window.


This short-context conditioning becomes a critical bottleneck in long-horizon manipulation. In multi-stage tasks, visually similar observations may correspond to different task progress, such as whether an object has already been moved, whether a container has been filled, or which object was previously placed in a buffer location. 
The 
desired actions therefore depend not only on local visual details, but also on sparse historical events that reveal task progress and previous interactions.
Without access to such history, policies may fail to disambiguate temporally aliased states, leading them to ignore completed subgoals or undo previous actions.
\begin{figure*}[th]
    \centering
    \includegraphics[width=\textwidth]{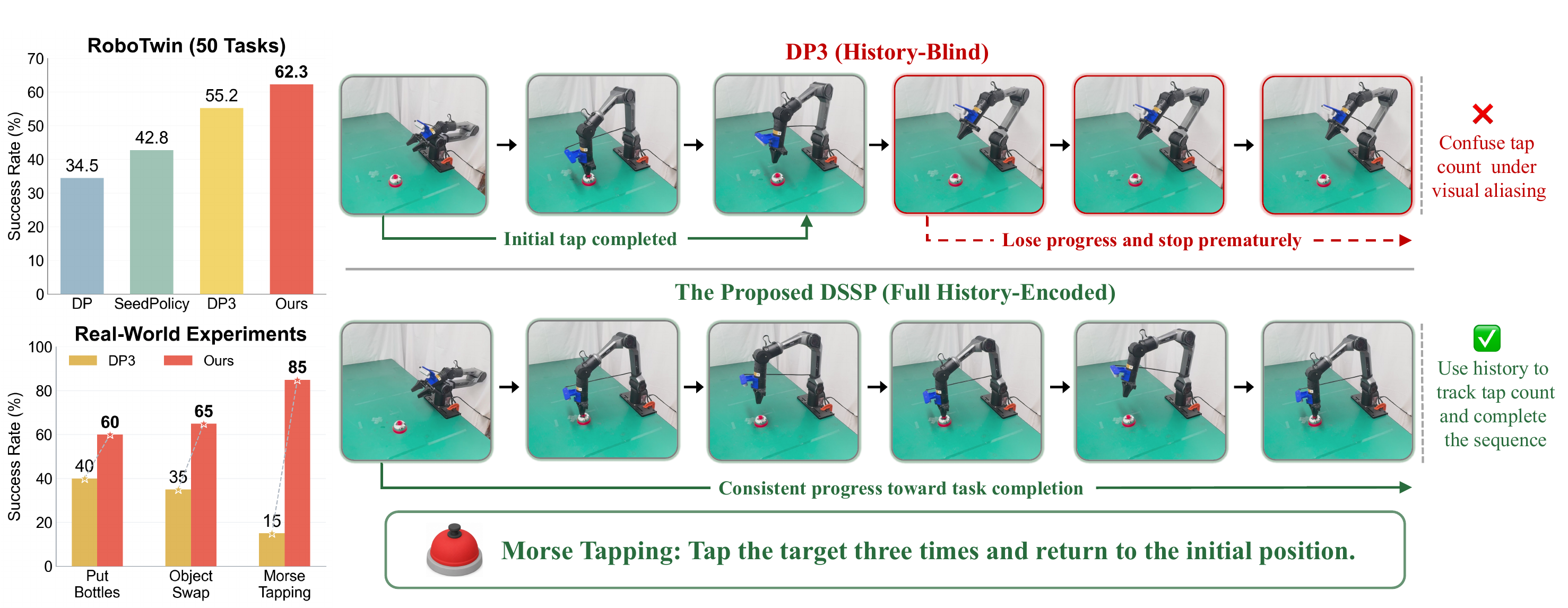}
    \caption{
    The proposed \textbf{\ours{}} leverages full-history context to resolve visual aliasing when history-blind baselines lose track of task progress, enabling consistent execution in long-horizon tasks.
    \ours{} achieves superior success rates across both simulation tasks and real-world experiments.
    }
    \label{fig:teaser}
\end{figure*}

Existing methods have attempted to address the limitations of short-context policies by adding temporal context.
A direct solution extends the observation horizon or feeds longer observation histories into the policy, but this increases memory and inference cost and may introduce redundant visual inputs
or spurious correlations~\citep{wen_fighting_2020,torne_learning_2025,zheng_tracevla_2025,mark_bpp_2026}.
Attention-based and multi-frame VLA methods provide more flexible access to past observations, yet remain costly because they explicitly process additional context as the horizon grows~\citep{li_cronusvla_2025,jang_contextvla_2025,koo_hamlet_2026}. To reduce this overhead, recent works therefore explore more compact memory forms, including keyframes~\citep{mark_bpp_2026,wei2025cyclemanip}, visual traces~\citep{zheng_tracevla_2025}, point tracks~\citep{chen_history-aware_2026}, or recurrent latent states~\citep{zhou_mtil_2025}.
However, compactness does not ensure task relevance: compressed memories may discard progress-critical events or focus on nuisance visual changes.
These limitations motivate a streaming history mechanism that efficiently compresses long-horizon context while preserving events predictive of future state.


To fulfill these requirements, State Space Models (SSMs) offer a well-suited architecture for history encoding.
Their recurrent formulation maintains a compact hidden state that is updated online as new observations arrive, enabling linear-time aggregation of long histories without re-encoding the entire past.
Moreover, modern selective SSMs such as Mamba \citep{gu_mamba_2024} make the state update input-dependent, allowing the model to adaptively decide what to preserve or discard. This content-dependent memory update is particularly suitable for long-horizon manipulation, where visually similar observations may correspond to different task stages and require different actions. 
Building on this insight, we propose \ours{}, a full-history encoding diffusion state-space policy for long-horizon manipulation.
Following the use of SSMs in diffusion-based robot policies~\citep{cao_mamba_2025,jia_mail_2024,yoo_robossm_2025}, \ours{} instantiates the action denoiser with an SSM backbone, but conditions it on a compact online memory compressed from the full multi-modal observation stream together with immediate state information.
A dynamics-aware auxiliary objective encourages this memory to preserve cues predictive of future states.
We further decouple task conditioning from diffusion-step modulation by injecting observation-derived conditions as a prefix and timestep information through Adaptive Layer Normalization (AdaLN)~\citep{peebles_scalable_2023}.


Our contributions are summarized as follows:
\begin{itemize}
[leftmargin=0.8em,itemsep=0.2em,topsep=0.2em]
\item \textbf{Hierarchically conditioned state space policy.}
    We design a policy framework instantiated with a state space model and a hierarchical conditioning mechanism.
    Specifically, learned context representations and immediate state representations are fused via prefix conditioning, while the diffusion timestep is decoupled and injected independently through AdaLN.

    \item \textbf{Full-history context learning.}
    We propose a causal state-space history encoder that maintains a compact context representation by recurrently integrating incoming multi-modal observations.
    Together with a dynamics-aware auxiliary objective, the encoder summarizes the full observation history while retaining task-relevant events predictive of future states.


    \item \textbf{Comprehensive evaluation.}
    We evaluate \ours{} on extensive simulated and real-world long-horizon manipulation tasks, showing improved success rates, effective history summarization, perturbation robustness, and efficient inference with increasing history length.
\end{itemize}
\section{Related Work}\label{sec:related}
\textbf{Imitation Learning for Robot Manipulation.}
Imitation learning enables policies to acquire skills from expert demonstrations. 
While early behavior cloning methods directly predict actions from observations, recent approaches improve closed-loop consistency by modeling temporally extended or structured actions, such as action chunks~\citep{zhao_learning_2023} and multimodal action representations~\citep{shafiullah_behavior_2022,lee_behavior_2024}. 
Diffusion-based policies further generate continuous action trajectories via conditional denoising~\citep{chi_diffusion_2024}, with 3D Diffusion Policy extending this framework to point-cloud observations~\citep{ze_3d_2024}. 
Recent variants improve diffusion policies in perception~\citep{lu_hmathbf3dp_2025}, efficiency~\citep{fang2026omponestepmeanflowpolicy,sheng_mp1_2025}, and generation quality~\citep{yan_maniflow_2025,ma_cdp_2025}. 
Our method follows this paradigm, using conditional denoising for action generation.

\textbf{Long-Horizon and History-Aware Policy Learning.} Most existing robot policies condition action prediction on the current observation or a short window of recent observations, which is often sufficient for short-horizon or near-Markovian tasks.
In long-horizon manipulation, resolving temporal ambiguity requires observation history. However, naively stacking past observations often degrades performance due to redundancy and causal confusion~\citep{haan_causal_2019, wen_fighting_2020, wen_keyframe-focused_2021, swamy_sequence_2023}. Recent methods therefore explore more selective uses of history, such as regularizing long-context policies with past-token prediction~\citep{torne_learning_2025}, selecting task-relevant keyframes from history~\citep{mark_bpp_2026}, using demonstration trajectories as in-context prompts~\citep{xie_-context_nodate}, or compressing long observation histories into an evolving latent state~\citep{gui_seedpolicy_2026, zhou_mtil_2025}.
In contrast, our method learns a compact, dynamics-aware history context that preserves future-relevant information for effective long-horizon policy learning.

\textbf{State Space Models for Robot Policies.}
State space models have recently shown strong potential for sequence modeling by maintaining an evolving latent state over time. Mamba~\citep{gu_mamba_2024, dao_transformers_2024, lahoti_mamba-3_2026} further improves this paradigm with selective state updates and hardware-aware parallel scan, enabling efficient long-sequence modeling. Inspired by these properties, recent robotic learning methods have adopted Mamba as a backbone for imitation learning and diffusion-based action generation~\citep{oh2026dispodiffusionssmbasedpolicy, jia_mail_2024, cao_mamba_2025, jia_x-il_2025}, or as a recurrent encoder for long observation histories in temporally ambiguous tasks~\citep{tsuji_mamba_2025, zhou_mtil_2025}. These findings demonstrate the effectiveness of Mamba for modeling robot trajectories and temporal dependencies. In this paper, we introduce a unified Mamba-based policy that integrates dynamics-aware history encoding with diffusion-based action generation.

\section{Preliminaries}
\label{subsec:problem_setup}

\textbf{Problem Formulation.}
We formulate long-horizon 3D manipulation as a Partially Observable Markov Decision Process (POMDP), defined by the tuple $\mathcal{M} = (\mathcal{S}, \mathcal{A}, \mathcal{T}, \Omega, \mathcal{O})$. 
At any time step $t$, the agent receives an observation $o_t \in \Omega$ generated from the unobserved true state $s_t \in \mathcal{S}$ via the observation function $\mathcal{O}(o_t \mid s_t) = P(o_t \mid s_t)$. 
The environment evolves according to transition dynamics $\mathcal{T}(s_{t+1} \mid s_t, a_t) = P(s_{t+1} \mid s_t, a_t)$ given action $a_t \in \mathcal{A}$. 
Since $o_t$ is insufficient to infer $s_t$, we define the interaction history as $h_t = (o_0, a_0, \dots, a_{t-1}, o_t) \in \mathcal{H}$, where $\mathcal{H}$ denotes the full history space. 
Our goal is to learn a history-dependent policy $\pi_\theta(a_t \mid h_t)$ imitating an expert policy $\pi_E$ with expert trajectories $\mathcal{D}_E = \{ \zeta_1, \dots, \zeta_N \}$, where each trajectory is a sequence $\zeta = (o_0, a_0, \dots, o_T)$.
Due to the page limit, we include a detailed introduction to diffusion policy and SSM in Appendix~\ref{app:preliminaries}.

\textbf{Observation Aliasing.}
A fundamental challenge in long-horizon manipulation is that the mapping $\mathcal{O}$ is often non-injective, resulting in observation aliasing.
\begin{definition}[Observation Aliasing]
Observation aliasing occurs when two distinct histories $h_t^1, h_t^2 \in \mathcal{H}$ yield identical current observations $o_t^1 = o_t^2$, but require different expert action distributions:
\begin{equation}
P_E(a_t \mid h_t^1) \neq P_E(a_t \mid h_t^2).
\end{equation}
\end{definition}
Under these conditions, a purely reactive policy $\pi(a_t \mid o_t)$ collapses distinct contexts into a suboptimal marginal distribution $P_E(a_t \mid o_t)$. By processing the full history sequence using the aforementioned SSM backbone, our policy $\pi_\theta(a_t \mid h_t)$ resolves this ambiguity (see \cref{subsec:theoretical analysis} for analysis). 

\section{Method}
\label{sec:method}
\begin{figure}[!t] 
    \centering 
    \includegraphics[width=\linewidth]{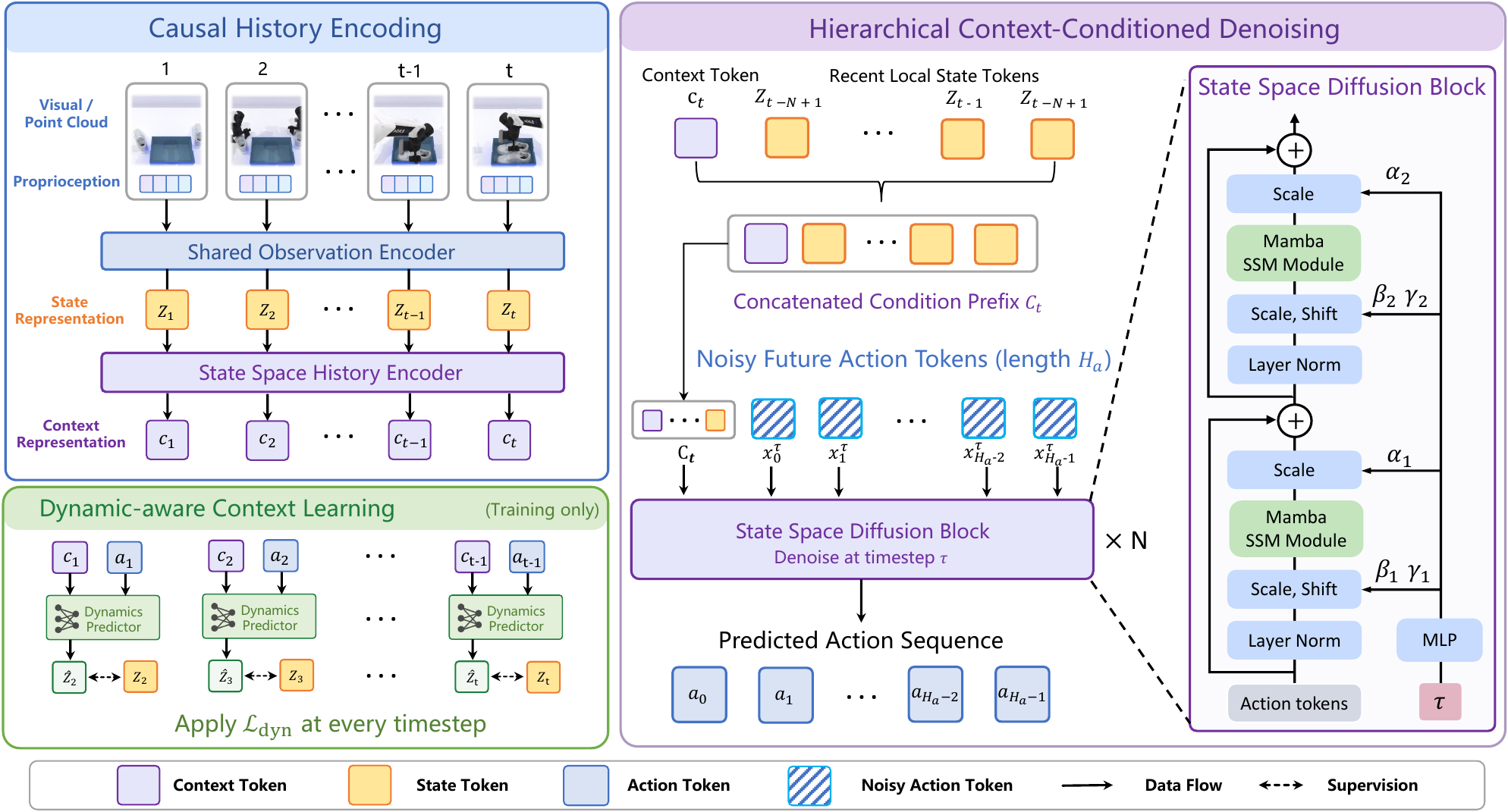} 
    \caption{
        \textbf{Overview of \ours{}.}
        \ours{} summarizes past multi-modal observations into a compact context token using a state-space history encoder. A dynamics-aware auxiliary loss encourages this token to retain historical information predictive of future state evolution. The learned context token is then combined with recent state tokens as a hierarchical prefix condition for a state-space diffusion denoiser to generate future actions.
        }
    \label{fig:method_overview}
\end{figure}




To effectively leverage history information within a robot manipulation policy, we propose diffusion state space policy (\ours), a full-history conditioned diffusion policy (illustrated in \Cref{fig:method_overview}).
Our approach instantiates the diffusion model with an SSM backbone and a dual-level conditioning mechanism that integrates high-level context with low-level state representations.
The context representation serves as a compact encoding of historical multi-modal observations.
To ensure this representation captures temporal dependencies, we introduce an auxiliary dynamics-aware loss focused on future state prediction. \Cref{sec:history_encoding} introduces long-horizon context learning, including causal history encoding and dynamics-aware context learning; and \Cref{sec:context_conditioned_diffusion} formalizes the hierarchical conditioning mechanism and diffusion-based action generation policy.

\subsection{Long-Horizon Context Learning}
\label{sec:history_encoding}

Long-horizon manipulation requires historical context to resolve task-level ambiguities; otherwise, policies may experience perceptual aliasing, such as getting trapped in loops during repetitive wiping tasks. 
Because existing conditioning approaches like observation stacking or keyframe extraction often suffer from computational redundancy or overlook subtle causal events, we introduce a learnable history encoder that compresses the full history into a compact, task-relevant context representation.

To this end, our design is guided by two key principles. First, we employ a causal SSM to efficiently process the streaming observation history and extract a temporally integrated memory representation. Second, to ensure that this representation preserves historical cues useful for future decisions, we shape the latent space with a dynamics-aware auxiliary objective.

\textbf{Causal History Encoding.}
We first introduce how to build the context representation for history encoding.
Let $o_t$ denote the multi-modal observation at timestep $t$, 
comprising visual inputs and robot proprioceptive states.
We project this raw observation into a state representation $z_t$:
\begin{equation}
\label{eq:observation_encoder}
z_t = E_{\mathrm{obs}}(o_t),
\end{equation}
where $E_{\mathrm{obs}}$ contains parallel visual and proprioceptive encoders.
To capture long-horizon temporal dependencies, a causal history encoder $G$ processes the sequence of step-wise state representations $z_{1:t}$ into the temporally-integrated context representation $\tilde{z}_{1:t}$:
\begin{equation}
\label{eq:history_encoding}
\tilde{z}_{1:t} = G_\psi(z_{1:t}).
\end{equation}
A critical design choice is the architecture of the history encoder $G$, which must produce a compact yet effective history representation.
This design must satisfy two primary criteria: (1) maintaining a scalable computation when processing extended temporal horizons, and (2) distilling a representation that selectively retains salient events rather than passively memorizing the entire observation stream.

To meet these requirements, we instantiate the history backbone using a State-Space Model (SSM) and define the context representation $c_t$ as the final output token of the encoded sequence:
\begin{equation}
   c_t = \tilde{z}_t.
\end{equation}
SSMs support streaming histories with linear-time complexity, and we use Mamba as the history encoder for input-dependent selective updates.
This allows the encoder to filter redundant observations while preserving sparse task-relevant events, such as object-state transitions, contact changes, or subgoal completion.
The resulting context $c_t$ serves as a compressed memory for action generation.

\textbf{Dynamics-Aware Context Learning.}
\label{sec:dynamics_context_learning}
Compressing the observation history into $c_t$ does not by itself guarantee that the representation preserves historical information relevant to future decisions.
For long-horizon manipulation, the context representation should encode not only past observations, but also history-dependent cues that are predictive of future state evolution.
To encourage this property, we introduce a dynamics-aware auxiliary objective applied to the context representation at each timestep.
Given the context representation $c_t$ and the executed action $a_t$ at time $t$, we train a lightweight dynamics predictor $g_{\phi}$ to predict the next state representation:
\begin{equation}
\label{eq:dynamics_prediction}
z_{t+1} = E_{\mathrm{obs}}(o_{t+1}),
\qquad
\hat{z}_{t+1} = g_{\phi}(c_t, a_t).
\end{equation}
We supervise this prediction with a cosine similarity loss:
\begin{equation}
\label{eq:dynamics_loss}
\mathcal{L}_{\mathrm{dyn}}(\psi, \phi) = \mathbb{E}_{\zeta \sim \mathcal{D}_E, t \sim [0, T-1]} \left[ 1 - \cos\left( g_\phi(c_t, a_t), \mathrm{sg}(z_{t+1}) \right) \right],
\end{equation}
where $\mathrm{sg}(\cdot)$ denotes the stop-gradient operation, and the expectation is taken over expert trajectories $\zeta \sim \mathcal{D}_E$ and trajectory timesteps $t$.
This objective encourages the context representation $c_t$ to retain action-relevant historical information by requiring it to support prediction of the next state under the executed action.
The learned context representation therefore provides a more informative conditioning signal for downstream action generation.

\subsection{Hierarchical Context-Conditioned Denoising}
\label{sec:context_conditioned_diffusion}

Given the learned history representation, the remaining question is how to inject it into a policy.
Long-horizon manipulation requires both long-term task-progress information and recent local observations: the former resolves visual ambiguity across task stages, while the latter provides fine-grained control cues.
Therefore, we propose a diffusion policy utilizing a hierarchical conditioning mechanism that integrates the context representation with immediate state observations.
We organize these signals as a causal prefix to the noisy action sequence, allowing long-term context to contextualize recent observations before being propagated to the action tokens during denoising.
For efficiency, we instantiate the diffusion backbone with a compact SSM, yielding a lightweight policy while maintaining a unified SSM-based architecture for both history encoding and action denoising.


\textbf{Hierarchical Prefix Conditioning.}
To preserve long-term progress information and local control cues, we condition the denoising model on a hierarchical prefix.
Let $x_0=\mathbf{a}_{t:t+H_a-1}$ denote the clean future action trajectory, and let $x_\tau$ denote its noisy version at diffusion step $\tau$.
We construct the condition sequence as
\begin{equation}
\label{eqs:prefix_condition}
C_t = [c_t, z_{t-N+1}, \dots, z_t],
\end{equation}
where $c_t$ is the context representation produced by the history encoder, $z_{t-N+1:t}$ are the most recent state tokens, and $N$ is the local observation window size.
We prepend $C_t$ to the noisy action sequence as a prefix condition.
The resulting sequence is processed by the SSM denoising backbone:
\begin{equation}
\label{eq:denoising_prediction}
\hat{x}_0 = f_\theta(x_\tau,\tau,C_t).
\end{equation}
In this design, $c_t$ captures long-horizon progress and $z_t$ retains local geometric and proprioceptive details. Thus, the policy leverages historical context without losing manipulation precision.

\textbf{Causal Action Denoising.}
The hierarchical prefix formulation casts action denoising as a causal prefix-conditioned sequence modeling.
Ordering the sequence as $[c_t, z_{t-N+1}, \dots, z_t, x_\tau]$ allows long-term context to contextualize recent states before propagating information to noisy action tokens.
We instantiate $f_\theta$ with a Mamba backbone, whose recurrent selective state updates naturally match this left-to-right conditioning flow while providing linear scaling for iterative diffusion sampling.

\textbf{Timestep-Decoupled Action Denoising.}
To provide a stable conditioning signal throughout iterative denoising, we decouple timestep modulation from prefix conditioning.
Since the diffusion timestep \hjs{$\tau$} describes the noise level of the action trajectory,
we inject the timestep embedding through AdaLN only into the action tokens while keeping the prefix condition unchanged.
This keeps $C_t$ as a stable representation of history and local observations throughout the denoising process.
Meanwhile, the actions remain aware of the current noise level and can progressively refine the predicted action sequence.
This design separates task conditioning from diffusion-step modulation.

\textbf{Training.}
The policy is optimized with two objectives: the diffusion reconstruction loss for action generation and the dynamics-aware auxiliary loss for context learning.
For an action window starting at time $t$, we construct the hierarchical condition $C_t$ causally from observations up to time $t$ and train the denoising model to predict the clean action trajectory:

\begin{equation}
\label{eq:diffusion_loss}
\mathcal{L}_{\mathrm{diff}}(\theta, \psi) = \mathbb{E}_{\zeta \sim \mathcal{D}_E, t, \tau, \epsilon} \left[ \left\| f_\theta(x_\tau, \tau, C_t) - x_0 \right\|_2^2 \right],
\end{equation}
where the expectation is taken over expert trajectories $\zeta \sim \mathcal{D}_E$, trajectory timesteps $t \sim \mathcal{U}(0, T-H_a)$, diffusion steps $\tau \sim \mathcal{U}(1, L)$, and Gaussian noise $\epsilon \sim \mathcal{N}(0, \mathbf{I})$. 
Accordingly, the overall objective is:
\begin{equation}
\label{eq:total_loss}
\mathcal{L(\theta, \psi, \phi)}
=
\mathcal{L}_{\mathrm{diff}}(\theta, \psi)
+
\lambda \mathcal{L}_{\mathrm{dyn}}(\psi, \phi)
\end{equation}
where $\lambda$ balances action denoising and context representation learning.

\begin{figure*}[t]
    \centering
     \includegraphics[width=\textwidth, trim=0cm 0.0cm 0cm 0cm, clip]{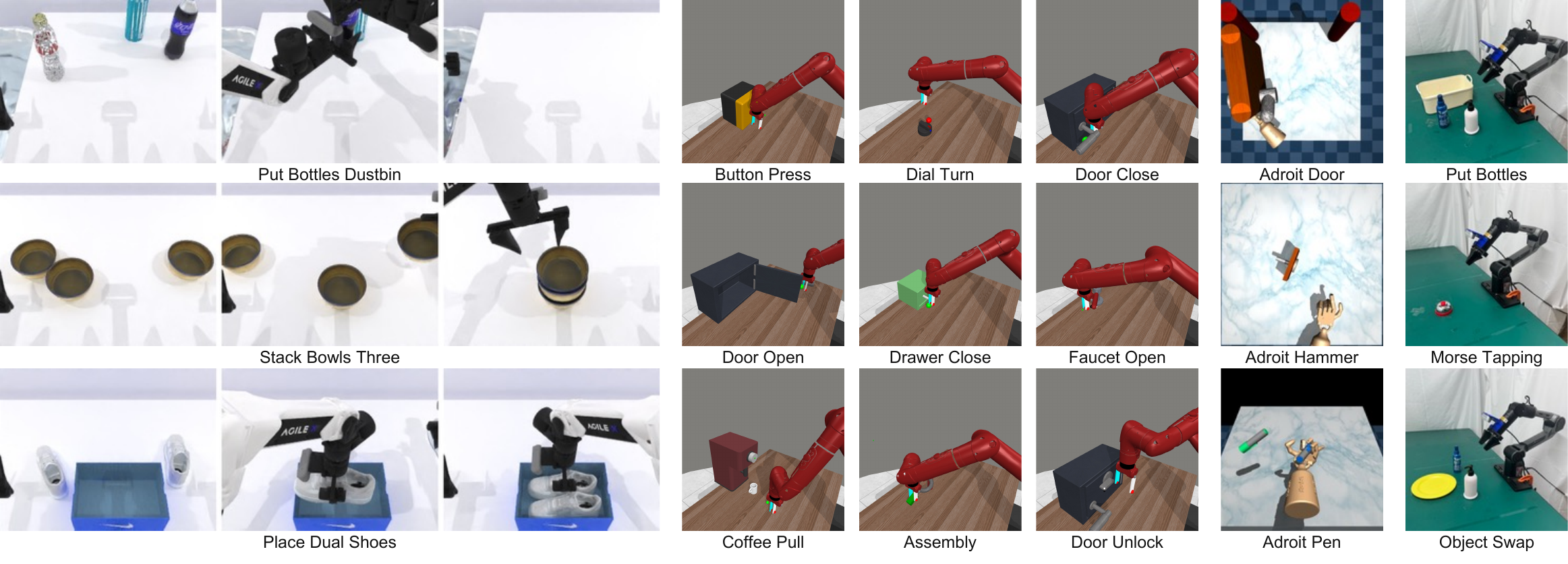}
    \caption{
    \textbf{Overview of Experimental Environments.} 
    The figure summarizes representative environments from both simulation and real-world experiments.
    In each row, the left three panels visualize the RoboTwin tasks, the middle three columns present representative MetaWorld tasks, the next column shows Adroit tasks, and the rightmost column shows our real-world tasks.
    }
    \label{fig:simulation_task_overview}
\end{figure*}

\subsection{Theoretical Analysis}
\label{subsec:theoretical analysis}
Here we provide a theoretical analysis of the benefits of history conditioning for imitation learning in partially observable environments.
Our goal is to characterize how the integration of temporal context mitigates the information loss inherent in POMDPs.
To analyze the impact of information sets on performance, we denote $\mathcal{L}_{\mathrm{diff}}(\pi; X_t)$ as the expected diffusion loss for a policy $\pi$ conditioned on variable $X_t$.
With the formalization of POMDP and observation aliasing in \cref{subsec:problem_setup}, and the imitation objective defined in \cref{eq:diffusion_loss}, we present two core propositions.
\begin{proposition}
\label{method:proposition1}
The minimum achievable diffusion-based imitation loss for a history-conditioned policy is always less than or equal to that of a reactive policy. That is,
\begin{equation}
    \min_{\theta} \mathcal{L}_{\mathrm{diff}}(\pi_\theta; h_t) \leq \min_{\theta} \mathcal{L}_{\mathrm{diff}}(\pi_\theta; o_t).
\end{equation}
\end{proposition}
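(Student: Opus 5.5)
The argument is a nesting of function classes: a policy that sees $h_t$ can always choose to ignore everything except $o_t$, so the history-conditioned minimization is over a superset of the reactive one. I will make this precise by treating $\mathcal{L}_{\mathrm{diff}}(\pi; X_t)$ as a minimization over measurable denoisers $f(x_\tau,\tau,X_t)$, i.e.\ over a sufficiently expressive function class. This is the natural reading of $\min_\theta$ in the statement.

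\textbf{Step 1: conditioning variables.} Since $o_t$ is the last component of $h_t = (o_0,a_0,\dots,o_t)$, there is a deterministic projection $p:\mathcal{H}\to\Omega$ with $o_t = p(h_t)$.

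\textbf{Step 2: embedding reactive denoisers.} Take any reactive denoiser $f^{o}(x_\tau,\tau,o_t)$ and define the history-conditioned denoiser
\begin{equation*}
f^{h}(x_\tau,\tau,h_t) := f^{o}(x_\tau,\tau,p(h_t)).
\end{equation*}
Along every sample $(\zeta,t,\tau,\epsilon)$ drawn in \cref{eq:diffusion_loss}, the integrands $\|f^{h}-x_0\|_2^2$ and $\|f^{o}-x_0\|_2^2$ are identical. Their expectations are therefore equal, so
\begin{equation*}
\mathcal{L}_{\mathrm{diff}}(f^{h}; h_t) = \mathcal{L}_{\mathrm{diff}}(f^{o}; o_t).
\end{equation*}
Taking the infimum over $f^{h}$ on the left and then over $f^{o}$ on the right gives the inequality.

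\textbf{Step 3: explicit minima.} As a cross-check that also prepares Proposition 2, I would write the minimizer explicitly. For squared loss, the optimal denoiser is the conditional expectation $\mathbb{E}[x_0 \mid x_\tau,\tau,X_t]$, and the minimal loss equals $\mathbb{E}\,\mathrm{Var}(x_0\mid x_\tau,\tau,X_t)$. Because $\sigma(x_\tau,\tau,o_t)\subseteq\sigma(x_\tau,\tau,h_t)$, the law of total variance (equivalently, the tower property with Jensen) gives
\begin{equation*}
\mathbb{E}\,\mathrm{Var}(x_0\mid\cdot,h_t)\le \mathbb{E}\,\mathrm{Var}(x_0\mid\cdot,o_t).
\end{equation*}
The gap is $\mathbb{E}\big\|\mathbb{E}[x_0\mid\cdot,h_t]-\mathbb{E}[x_0\mid\cdot,o_t]\big\|^2$, which is strictly positive under observation aliasing.

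\textbf{Main obstacle: the parametric setting.} The delicate point is whether $\min_\theta$ ranges over a fixed parametric family. For Step 2 to apply there, the history-conditioned architecture must be able to ignore the context token $c_t$ and all past inputs while reproducing the reactive network. I would state this as a mild realizability assumption: the reactive architecture embeds in the history-conditioned one, for instance by zeroing the weights that read $c_t$ and the earlier tokens in the prefix $C_t$. Alternatively, I would interpret the proposition in the nonparametric, Bayes-optimal sense, where Step 3 alone suffices. I would also note that existence of the minima is not needed, since working with infima already yields the inequality.
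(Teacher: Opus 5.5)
Your proof is correct, but its main argument takes a different route from the paper's. The paper does not compare function classes. It identifies the optimum with the Bayes risk $\mathcal{L}^*(X)=\mathbb{E}[\mathrm{Var}(a_t\mid X)]$, applies the law of total variance using $o_t\subset h_t$, and discards the nonnegative ``aliasing penalty'' $\mathrm{Var}(\mathbb{E}[a_t\mid h_t]\mid o_t)$; this is essentially your Step 3. Your Step 2 instead embeds each reactive denoiser as $f^{o}\circ p$. That argument needs no characterization of the minimizer, works for any per-sample loss, and works with infima rather than minima. It is also the only one of the two arguments that survives when $\min_\theta$ ranges over a fixed parametric family (given your realizability assumption). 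There the minima are no longer Bayes risks, so the variance identity says nothing about them. What the variance route buys is an explicit expression for the gap, which the paper reuses for Proposition~\ref{method:proposition2}. Your Step 3 is also more faithful to \cref{eq:diffusion_loss} than the paper's version. The denoiser sees $(x_\tau,\tau)$ and predicts the whole chunk $x_0$, so the Bayes risk is $\mathbb{E}\,\mathrm{Var}(x_0\mid x_\tau,\tau,X_t)$ rather than $\mathbb{E}\,\mathrm{Var}(a_t\mid X_t)$. The monotonicity conclusion is unaffected, because $\sigma(x_\tau,\tau,o_t)\subseteq\sigma(x_\tau,\tau,h_t)$ in either formulation. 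One caution: your closing claim that the gap is strictly positive under observation aliasing is not needed for this proposition, and it requires extra hypotheses. Aliased histories must carry positive probability given $o_t$, and their differing action distributions must yield differing posterior means $\mathbb{E}[x_0\mid x_\tau,\tau,h_t]$. The latter holds for nondegenerate Gaussian noise by Tweedie's formula, but it should be argued.
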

\begin{proposition}
\label{method:proposition2}
In the presence of observation aliasing, history conditioning strictly reduces the minimum achievable imitation loss compared to a reactive policy:
\begin{equation}
    \min_{\theta} \mathcal{L}_{\mathrm{diff}}(\pi_\theta; h_t) < \min_{\theta} \mathcal{L}_{\mathrm{diff}}(\pi_\theta; o_t), \quad \text{whenever} \quad I_E(a_t; h_t \mid o_t) > 0.
\end{equation}
This condition implies that history resolves state ambiguity by capturing mutual information that is inaccessible to a reactive policy.
\end{proposition}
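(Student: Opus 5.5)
The proof of Proposition~\ref{method:proposition2} rests on viewing the minimum diffusion loss as a conditional-variance term. For $x_0$-prediction at fixed $(\tau,\epsilon)$, the denoiser receives $x_\tau = \sqrt{\bar\alpha_\tau}x_0+\sqrt{1-\bar\alpha_\tau}\epsilon$ together with the conditioning variable $X_t$. We assume the function class $\{f_\theta\}$ is rich enough to contain every measurable function of $(x_\tau,\tau,X_t)$. Then the squared-error minimizer is the posterior mean $f^\star(x_\tau,\tau,X_t)=\mathbb{E}_E[x_0\mid x_\tau,\tau,X_t]$. The minimum loss is therefore
\begin{equation}
\min_\theta \mathcal{L}_{\mathrm{diff}}(\pi_\theta;X_t)=\mathbb{E}_{\tau}\,\mathbb{E}\big[\operatorname{tr}\operatorname{Cov}_E(x_0\mid x_\tau,\tau,X_t)\big].
\end{equation}
Here the inner expectation is over the expert data distribution induced by $\mathcal{D}_E$. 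Both propositions then follow by comparing this quantity for $X_t=h_t$ and for $X_t=o_t$.

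\textbf{Proposition~\ref{method:proposition1} (weak inequality).} Since $o_t$ is a component of $h_t$, the $\sigma$-algebra generated by $(x_\tau,o_t)$ is contained in that generated by $(x_\tau,h_t)$. I will give two short arguments.
\begin{itemize}
\item Any function of $(x_\tau,\tau,o_t)$ is also a function of $(x_\tau,\tau,h_t)$ that ignores the past. The feasible set for history conditioning therefore contains the reactive one, so its minimum can only be smaller or equal.
\item Equivalently, the law of total variance (the tower property) gives $\mathbb{E}[\operatorname{Var}(x_0\mid x_\tau,h_t)]\le \mathbb{E}[\operatorname{Var}(x_0\mid x_\tau,o_t)]$.
\end{itemize}
The inclusion argument needs only that the history policy class can express every reactive policy, which holds under the expressivity assumption above.

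\textbf{Proposition~\ref{method:proposition2} (strict inequality).} Write the gap as
\begin{equation}
\Delta=\mathbb{E}\,\big\|\mathbb{E}[x_0\mid x_\tau,h_t]-\mathbb{E}[x_0\mid x_\tau,o_t]\big\|^2 .
\end{equation}
This is the extra term in the variance decomposition. We must show $\Delta>0$ on a set of $\tau$ of positive measure whenever $I_E(a_t;h_t\mid o_t)>0$. The steps are as follows.
\begin{enumerate}
\item The condition $I_E(a_t;h_t\mid o_t)>0$ means that $P_E(a_t\mid h_t)\neq P_E(a_t\mid o_t)$ on a set of positive probability. This is exactly the observation aliasing of Definition~1. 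Since $a_t$ is the first coordinate of $x_0$, the laws of $x_0$ given $h_t$ and given $o_t$ also differ on that set.
\item Take $\tau$ near $L$, where $x_\tau$ is nearly pure noise and carries little information about $x_0$. The posterior means there approach $\mathbb{E}[x_0\mid h_t]$ and $\mathbb{E}[x_0\mid o_t]$.
\item The Gaussian corruption kernel has an injective characteristic function. Hence two different conditional laws of $x_0$ yield different posterior-mean functions $x_\tau\mapsto\mathbb{E}[x_0\mid x_\tau,\cdot]$ for every $\tau$. Two laws with the same posterior mean as a function of $x_\tau$ would share the same score of the noised marginal, and so would be the same noised marginal, which deconvolution rules out.
\item It follows that $\Delta>0$ for each $\tau$. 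Averaging over $\tau$ gives the strict inequality.
\end{enumerate}

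\textbf{Main obstacle.} The hard step is the third one: turning a difference in conditional action distributions into a strict gap in MSE. Mutual information detects any difference in distribution, while a squared loss at a single noise level only detects differences in means. This is why the argument uses the diffusion structure. Tweedie's formula relates the posterior mean to the score $\nabla\log p_\tau(x_\tau\mid X_t)$. Equal posterior means for all $x_\tau$ would force equal noised densities. Injectivity of Gaussian convolution would then force $P_E(x_0\mid h_t)=P_E(x_0\mid o_t)$ almost surely, which contradicts $I_E>0$.

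To be safe, I would state the realizability assumption explicitly: $f_\theta$ can represent the Bayes-optimal denoiser. I would also note that the gap only has to be positive on a set of $\tau$ of positive measure under $\mathcal{U}(1,L)$. Since $\tau$ is discrete and uniform on $\{1,\dots,L\}$, any single $\tau$ with a positive gap is enough.
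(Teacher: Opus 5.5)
Your proposal is correct, and it takes a genuinely different route from the paper.

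\textbf{The paper's route.} The paper identifies the minimum loss with $\mathbb{E}[\mathrm{Var}(a_t\mid X)]$, which effectively drops the noisy input $x_\tau$ that the denoiser also sees. It then applies the law of total variance to $a_t$ given $o_t$ versus $h_t$. Next it asserts that aliasing produces histories with $\mathbb{E}[a_t\mid h_t^1]\neq\mathbb{E}[a_t\mid h_t^2]$, so that $\mathrm{Var}(\mathbb{E}[a_t\mid h_t]\mid o_t)>0$. The condition $I_E(a_t;h_t\mid o_t)>0$ is attached afterwards only as an interpretation.

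That argument is short, but it yields a strict gap only when the conditional \emph{means} differ on a set of positive probability. This follows neither from $I_E>0$ nor from Definition~1; for example, a bimodal and a unimodal action law can share the same mean. This is precisely the obstacle you identify.

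\textbf{Your route.} You keep $x_\tau$ in the conditioning and write the gap as $\mathbb{E}\|\mathbb{E}[x_0\mid x_\tau,h_t]-\mathbb{E}[x_0\mid x_\tau,o_t]\|^2$. Using Tweedie's formula and the injectivity of Gaussian convolution, you show that at any noise level with $0<\bar\alpha_\tau<1$ the posterior-mean map determines the whole conditional law of $x_0$. Hence any distributional difference certified by $I_E>0$ produces a strict MSE gap. This proves the statement as written. The price is an explicit realizability assumption over measurable functions of $(x_\tau,\tau,X_t)$, which the paper uses only implicitly.

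\textbf{Details to tighten when you write it up.}
\begin{enumerate}
\item Drop step~2. The $\tau\to L$ limit is not used, and on its own it collapses back to the paper's means-only argument.
\item Make explicit why ``different posterior-mean functions'' implies $\Delta>0$. Both maps are continuous in $x_\tau$, and $p_\tau(\cdot\mid h_t)$ has full support, so a pointwise discrepancy persists on an open set of positive mass. This must hold for every $h_t$ in the positive-probability set supplied by step~1.
\item Note that ``for every $\tau$'' requires $\bar\alpha_\tau>0$. This holds for standard DDPM schedules, and, as you observe, a single such $\tau$ suffices.
\end{enumerate}
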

\Cref{method:proposition1} establishes that history conditioning never degrades the theoretical performance limit of the policy, while \cref{method:proposition2} proves that history conditioning strictly improves performance in the presence of observation aliasing.
These results demonstrate that history acts as a sufficient statistic for the belief state, allowing the policy to disambiguate latent environmental configurations through the capture of action-relevant mutual information. 
We summarize the high-level logic here and defer the complete mathematical derivations to the Appendix \ref{app:theoretical_analysis}.


\section{Experiments}
\subsection{Experimental Setup}
\label{subsec:exp_setup}
\begin{table*}[t]
\caption{
\textbf{Main Results on RoboTwin 2.0. }
Tasks are grouped into Short, Mid, and Long horizons. 
``Obs.'' denotes the observation modality, including RGB and point-cloud observations. 
\ours{} achieves the best overall performance with the smallest model size.
}
\centering
\small
\setlength{\tabcolsep}{5pt}

\begin{tabular}{llccccc}
\toprule
\multirow{2}{*}{\normalsize Method} 
& \multirow{2}{*}{\normalsize Obs.} 
& \multicolumn{4}{c}{\normalsize Success Rate by Horizon $\uparrow$} 
& \multirow{2}{*}{\normalsize Params $\downarrow$} \\
\cmidrule(lr){3-6}
& 
& Short (18) 
& Mid (15) 
& Long (17) 
& Average 
& \\
\midrule
ACT (RSS'23)    & RGB  & 29.94 & 26.40 & 32.47 & 29.74 & 80.0M \\
DP (RSS'23)       & RGB  & 31.28 & 26.00 & 26.41 & 28.04 & 96.8M \\
$\pi_0$(arXiv'24) & RGB  & 44.56 & 44.07 & 50.47 & 46.42 & 3.3B \\
RDT (ICLR'25)     & RGB  & 36.72 & 32.47 & 33.94 & 34.50 & 1.2B \\
SeedPolicy (arXiv'26) & RGB & 43.39 & 36.53 & 47.59 & 42.76 & 147.3M \\
DP3 (RSS’24)     & P.C. & 59.83 & 52.53 & 52.76 & 55.24 & 
264.4M \\
FlowPolicy (AAAI'25)  & P.C. & 54.89 & 37.40 & 29.59 & 	41.04 & 264.4M \\

\midrule
\ours{} (Ours) & P.C. 
& $\mathbf{64.78}$ 
& $\mathbf{57.33}$
& $\mathbf{64.06}$ 
& $\mathbf{62.30}$ 
& $\mathbf{44.3M}$ \\
\bottomrule
\end{tabular}

\label{tab:horizon_group_results}
\end{table*}
\textbf{Datasets.}
We evaluate \ours on diverse simulation benchmarks and real-world memory-dependent manipulation tasks, as shown in ~\Cref{fig:simulation_task_overview}.
For simulation, we use 87 tasks across three benchmarks: 50 bimanual tasks from RoboTwin 2.0~\citep{chen_robotwin_2025}, 34 single-arm tabletop tasks from MetaWorld~\citep{yu_meta-world_2021}, and 3 dexterous in-hand tasks from Adroit~\citep{rajeswaran_learning_2018}.
To analyze performance across task horizons, we group the 50 RoboTwin tasks into short-, mid-, and long-horizon categories based on average episode length.
Detailed grouping criteria, task lists, and subset selection are provided in Appendix~\ref{app:robotwin_horizon_grouping}. 

For real-world evaluation, we use an AgileX robotic platform equipped with a fixed Intel RealSense L515 camera for point-cloud observations.
We design three tabletop tasks that require long-horizon execution or progress tracking: 
(i) \textit{Put Bottles}, where the robot sequentially places two bottles into a basket;
(ii) \textit{Object Swap}, where the robot swaps two objects through an intermediate buffer slot;
and (iii) \textit{Morse Tapping}, where the robot taps a target three times before returning to the initial position.

\textbf{Baselines.}
We compare \ours{} with representative imitation-learning baselines on each benchmark.
DP3~\citep{ze_3d_2024} is the most direct baseline, as it uses the same 3D point-cloud observation and action interfaces as \ours{}, while DP~\citep{chi_diffusion_2024} serves as a standard diffusion-policy baseline.
On RoboTwin, we include ACT~\citep{zhao_learning_2023}, RDT~\citep{liu_rdt-1b_2025}, and $\pi_0$~\citep{bai_openpi_2025} from the official benchmark evaluation, together with recent methods SeedPolicy~\citep{gui_seedpolicy_2026}, and FlowPolicy~\citep{zhang2025flowpolicy}.
On MetaWorld and Adroit, we further compare with recent policy-learning methods, including AdaFlow~\citep{hu_adaflow_2024}, CP~\citep{prasad_consistency_2024}, and MP1~\citep{sheng_mp1_2025}.

\textbf{Evaluation Metric and Protocol.}
We use task success rate as the primary metric.
For RoboTwin, we evaluate 100 test episodes under the in-distribution \textit{demo clean} setting, defining success as completion within the maximum horizon.
For MetaWorld and Adroit, we evaluate 20 episodes every 200 training epochs and report the average of the top five success rates.
For real-world experiments, each method undergoes 20 trials per task with randomized initial configurations based on task-specific criteria.


\textbf{Implementation Details.}
All models are trained with AdamW on NVIDIA RTX 4090 GPUs.
We use 50 demonstrations per task for RoboTwin and real-world experiments, and 10 demonstrations per task for MetaWorld and Adroit.
\ours{} employs trajectory-wise batching for causal history encoding. Training schedules, architectural details, and hyperparameters are specified in Appendix~\ref{app:implementation_details}.

\subsection{Simulation Experiments}

\textbf{Main Results.}
We first evaluate \ours on tasks from RoboTwin and report the success rate on categorized tasks in Table~\ref{tab:horizon_group_results}.
On RoboTwin, \ours{} achieves a \textbf{12.8\%} relative improvement compared to DP3 on average, with the largest improvement on long-horizon tasks (\textbf{21.4\%}), indicating the benefit of long-horizon historical context.
Beyond RoboTwin, we further evaluate \ours on the shorter-horizon Adroit and MetaWorld benchmarks, with task horizons of 100 and 200 steps, respectively.
As shown in \Cref{tab:metaworld_adroit_results},
\ours achieves the best overall average among all compared methods. 

\begin{table*}[t]
\caption{
\textbf{Results on Adroit and MetaWorld.} A comprehensive comparison
across 37 tasks using 3 random seeds.
Numbers in parentheses indicate the number of tasks in each benchmark.}
\centering
\small
\setlength{\tabcolsep}{5pt}
\begin{tabular}{lcccccc}
\toprule
\multirow{2}{*}{Method}
& \multirow{2}{*}{Adroit (3)}
& \multicolumn{4}{c}{MetaWorld}
& \multirow{2}{*}{Average} \\
\cmidrule(lr){3-6}
&
& Easy (21)
& Medium (4)
& Hard (4)
& Very Hard (5)
& \\
\midrule
DP (RSS'23)
& $21.0{\pm}7.7$
& $50.7{\pm}6.1$
& $11.0{\pm}2.5$
& $5.25{\pm}2.5$
& $22.0{\pm}5.0$
& $35.2{\pm}5.3$ \\

Adaflow (NeurIPS'24)
& $30.0{\pm}7.7$
& $49.4{\pm}6.8$
& $12.0{\pm}5.0$
& $5.75{\pm}4.0$
& $24.0{\pm}4.8$
& $35.6{\pm}6.1$ \\

CP (arXiv'24)
& $29.7{\pm}6.7$
& $69.3{\pm}4.2$
& $21.2{\pm}6.0$
& $17.5{\pm}3.9$
& $30.0{\pm}4.9$
& $50.1{\pm}4.7$ \\

DP3 (RSS'24)
& $67.3{\pm}5.0$
& $87.3{\pm}2.2$
& $44.5{\pm}8.7$
& $32.7{\pm}7.7$
& $39.4{\pm}9.0$
& $68.7{\pm}4.7$ \\


FlowPolicy (AAAI'25)
& $71.0{\pm}2.3$
& $84.8{\pm}2.2$
& $58.2{\pm}7.9$
& $40.2{\pm}4.5$
& $52.2{\pm}5.0$
& $71.6{\pm}3.5$ \\

MP1 (AAAI'26)
& $\mathbf{75.7{\pm}2.3}$
& $88.2{\pm}1.1$
& $\mathbf{68.0{\pm}3.1}$
& $\mathbf{58.1{\pm}5.0}$
& $67.2{\pm}2.7$
& $78.9{\pm}2.1$ \\

\midrule
\ours{} (ours)
& $73.0{\pm}2.9$
& $\mathbf{90.5{\pm}2.1}$
& $67.4{\pm}4.1$
& $54.6{\pm}2.4$
& $\mathbf{71.3{\pm}3.8}$
& $\mathbf{80.1{\pm}2.6}$ \\
\bottomrule
\end{tabular}
\label{tab:metaworld_adroit_results}
\end{table*}

\textbf{Analysis of History Encoding.}
To better understand how \ours uses historical information, we conduct controlled studies on the six-task long-horizon subset defined in 
Appendix~\ref{app:robotwin_horizon_grouping}.
We analyze three aspects: (1) how temporal backbone and history length affect performance, (2) how efficiently the history encoder scales to full-history conditioning, and (3) whether the policy truly relies on context representation when recent observations are corrupted.
Together, these studies show that the gains of \ours come from effective long-history utilization, while maintaining practical efficiency.

\begin{table*}[h]
\caption{Comparison of temporal backbones across history lengths.
We report average success over six long-horizon tasks and history-prefix encoding cost in p95 latency and peak GPU memory.}
\centering
\small
\setlength{\tabcolsep}{5pt}
\begin{tabular}{lccc}
\toprule
Encoder Backbone & History Length ($T_h$) & Success Rate (\%) $\uparrow$ & Encoding Cost $\downarrow$ \\
\midrule
\multirow{3}{*}{Transformer} 
    & 10 & 68.00 & 1.43 ms / 176.5 MB \\
    & 20 & 61.83 & 1.43 ms / 176.7 MB \\
    & Full history & 66.00 & 3.61 ms / 586.2 MB \\
\midrule
\multirow{3}{*}{Mamba} 
    & 10 & 53.00 & 1.87 ms / 181.1 MB \\
    & 20 & 60.00 & 1.90 ms / 181.5 MB \\
    & \textbf{Full history (Ours)} & \textbf{71.33} & \textbf{1.97 ms / 238.5 MB} \\
\bottomrule
\end{tabular}
\label{tab:history_length_comparison}
\end{table*}

We first compare the impact of history length on different encoder backbones (Table~\ref{tab:history_length_comparison}). While the Transformer encoder performs best with a short window ($T_h=10$) and stagnates with longer histories, the state-space encoder scales effectively with increasing context. Mamba achieves its peak success rate of 71.33\% using full history, which is an 8.1\% relative improvement over the full-history Transformer.
These results confirm that the recurrent formulation of Mamba is better suited for aggregating varying length observations into a compact context representation.

Beyond improving success rates, \ours also scales more efficiently to long histories. Full-history Transformer encoding costs 3.61 ms and a larger peak GPU memory (586.2 MB), whereas our full-history Mamba encoder requires only 1.97 ms and 238.5 MB, reducing latency by 45.4\% and peak memory by 59.3\%.
This advantage stems from the linear-time state-space formulation, which aggregates history through recurrent state updates instead of pairwise attention over all historical tokens.
Meanwhile, during streaming inference, \ours only maintains a compact hidden-state cache, making the per-step encoding overhead nearly independent of accumulated history length.


\begin{table*}[h]
\caption{
Robustness comparison under Gaussian perturbations to recent state representations. We report the six tasks' average success rate (\%) under different perturbation scales ($\sigma$) during inference.
}
\centering
\small
\setlength{\tabcolsep}{5pt}
\begin{tabular}{@{} l cccc @{}}
\toprule
\multirow{2}{*}{\textbf{Method}} & \multicolumn{4}{c}{\textbf{Noise Scale ($\sigma$)}} \\
\cmidrule(l){2-5}
& \textbf{0.00} (Clean) & \textbf{0.05} & \textbf{0.10} & \textbf{0.15} \\
\midrule
DP3 (Baseline) & 58.67 & 43.00 & 15.83 & 3.17 \\
DSSP (w/o full history) & 60.00 & 41.67 & 23.33 & 11.33 \\
\textbf{DSSP (Full history)} & \textbf{71.33} & \textbf{52.33} & \textbf{37.00} & \textbf{20.83} \\
\bottomrule
\end{tabular}
\label{tab:robustness_comparison}
\end{table*}
We further test whether the policy uses historical context when recent observations are unreliable by perturbing the most recent three state tokens during inference:
$z_i^{\mathrm{pert}} = z_i + \sigma \epsilon_i$, where $\epsilon_i \sim \mathcal{N}(0,\mathbf{I})$. As shown in Table~\ref{tab:robustness_comparison}, DP3 and our short-window ($T_h=10$) variant degrade sharply as the perturbation scale increases, while \ours{} with full history maintains substantially higher success rates. These results indicate that the learned context token incorporates earlier information, preventing the policy from degenerating into merely relying on recent observations for action generation.

\begin{table*}[h]
\caption{Ablation on six history-sensitive RoboTwin tasks under the \textit{demo clean} setting.
Hist., TD, Recent, Dyn., and Trans. denote the history encoder, timestep-decoupled denoising, recent-state conditioning, dynamics-aware loss, and Transformer denoising backbone, respectively.}
\centering
\small
\setlength{\tabcolsep}{5pt}
\begin{tabular}{lccccccc}
\toprule
Metric & DP3 & w/o Hist. & w/o TD & w/o Recent & w/o Dyn. & w/ Trans. & Full \\
\midrule
Success Rate (\%) & 58.67 & 64.33 & 66.67 & 68.17 & 69.50 & 68.50 & \textbf{71.33} \\
Relative Improvement (\%) & -- & +9.65 & +13.64 & +16.21 & +18.46 & +16.75 & \textbf{+21.56} \\
\bottomrule
\end{tabular}
\label{tab:ssm_dp_ablation}
\end{table*}

\textbf{Ablation Study.}
We ablate the key components of \ours{} on the six-task history-sensitive subset defined in Section~\ref{subsec:exp_setup}.
As shown in Table~\ref{tab:ssm_dp_ablation}, the full model achieves the best success rate of 71.33\%, a 21.56\% relative improvement over DP3.
Removing the history encoder leads to the largest drop, highlighting the importance of long-horizon context.
Replacing the Mamba action denoising backbone with a Transformer-based one remains stronger than DP3 but underperforms the full model, showing that Mamba further improves temporal action generation.
Without timestep-decoupled conditioning, performance drops to 66.67\%, validating the effectiveness of our decoupled conditioning design.
Finally, removing recent-state conditioning or the dynamics-aware loss also degrades performance, confirming the importance of local grounding and predictive context learning.

\subsection{Real-World Experiments}
\label{sec:real_world_exp}
We evaluate \ours on three real-world manipulation tasks that require long-horizon memory or progress tracking, as shown in Table~\ref{tab:real_world_results}.
\ours substantially outperforms DP3 across all tasks, increasing the average success rate from 30\% to 70\% (a 133.3\% relative improvement).
The gains are particularly significant on Morse Tapping, where \ours improves the success rate from 15\% to 85\%.
These results demonstrate the effectiveness of full-history context for real-world memory-dependent manipulation, where the decision making often depends on previous interactions rather than the current observation alone.
To better understand where the gains come from, we provide a failure-mode analysis for each real-world task in Appendix~\ref{app:real_world_failure_analysis} and the limitations are discussed in \Cref{limitaions}.

\begin{table*}[!h]
\caption{
Real-world long-horizon task (avg. steps) evaluation.
}
\centering
\small
\setlength{\tabcolsep}{4.5pt}
\begin{tabular}{lccc}
\toprule
Method & Put Bottles (666) & Object Swap (713) & Morse Tapping (366) \\
\midrule
FlowPolicy (AAAI'25) & 40\% & 30\% & 10\% \\
MP1 (AAAI’26) & 45\% & 30\% & 25\% \\
DP3 (RSS’24) & 40\% & 35\% & 15\% \\
\ours{} (ours) & \textbf{60\%} & \textbf{65\%} & \textbf{85\%} \\
\bottomrule
\end{tabular}

\label{tab:real_world_results}
\end{table*}

\section{Conclusion}
\label{conclusion}
In this paper, we introduce \ours{}, an efficient full-history conditioned diffusion state-space policy for long-horizon robot manipulation.
Our results show that compactly encoding the full observation history improves temporal disambiguation and task-progress tracking in history-sensitive tasks.
The proposed state-space history encoder, dynamics-aware objective, and hierarchical timestep-decoupled conditioning jointly integrate long-horizon context with recent observations for action generation.

\bibliography{main, longhorizon,benchmark,diffusionparadigm,mamba,vla,citation}
\bibliographystyle{plainnat}


\appendix
\section{Preliminaries}
\label{app:preliminaries}
\textbf{Diffusion Policy.}
Diffusion Policy~\citep{chi_diffusion_2024} adapts Denoising Diffusion Probabilistic Models (DDPMs)~\citep{ho2020denoisingdiffusionprobabilisticmodels} to action generation. The policy treats a future action sequence $a_{t:t+H_a-1}$ of horizon $H_a$ as the clean sample $x_0$. We parameterize the model to predict the clean action sequence directly, conditioning the denoising process on the history representation $h_t$. During training, we optimize the reconstruction objective over $L$ diffusion steps:
\begin{equation}
\mathcal{L}_{x_0} = \mathbb{E}_{x_0,\tau,\epsilon} \left[ \left\| f_\theta(x_\tau,\tau,h_t) - x_0 \right\|_2^2 \right],
\end{equation}
where $\tau \in \{1, \dots, L\}$ is the uniform diffusion step and $x_\tau$ is the noise-corrupted action sequence.

\textbf{State Space Models (SSMs).}
We employ Mamba as the sequence backbone for both history encoding and action diffusion. A standard discrete SSM updates a hidden state $s_t \in \mathbb{R}^N$ and outputs $y_t$ via time-invariant parameters:
\begin{equation}
s_t = \bar{A} s_{t-1} + \bar{B} x_t, \qquad y_t = C s_t.
\end{equation}
Mamba introduces a selective mechanism where parameters become input-dependent: $(\Delta_t, B_t, C_t) = f_\theta(x_t)$, with $\Delta_t$ serving as the discretization step size. The discretized parameters are:
\begin{equation}
\bar{A}_t = \exp(\Delta_t A), \qquad \bar{B}_t = (\Delta_t A)^{-1}\big(\exp(\Delta_t A)-I\big)\Delta_t B_t.
\end{equation}
The recurrent update becomes $s_t = \bar{A}_t s_{t-1} + \bar{B}_t x_t$ and $y_t = C_t s_t$. This input-dependent selection enables the model to efficiently compress long observation histories and generate temporally coherent actions.

\section{Additional Related Works}

\subsection{Vision-Language-Action Models}

Vision-language-action (VLA) models have recently become a central direction for scalable robot learning.
RT-2~\citep{brohan2023rt2visionlanguageactionmodelstransfer} transfers vision-language knowledge to robotic control by representing actions as language-like tokens.
Octo~\citep{octomodelteam2024octoopensourcegeneralistrobot} and OpenVLA~\citep{kim_openvla_2024} further introduce open-source generalist policies trained on large cross-embodiment robot datasets.
Recent generative VLA models move beyond discrete action tokenization toward continuous action generation, including flow-matching-based policies such as $\pi_0$~\citep{bai_openpi_2025} and large-scale diffusion policies such as RDT-1B~\citep{liu_rdt-1b_2025}.

Recent work has further improved VLA policies along several directions.
OpenVLA-OFT~\citep{kim2025finetuningvisionlanguageactionmodelsoptimizing} studies effective fine-tuning recipes with continuous actions and action chunking, while VITA-VLA~\citep{gao2026vitavisiontoactionflowmatching} equips pretrained vision-language models with action-generation capability through action expert distillation.
Other works improve inference efficiency and temporal consistency through asynchronous generation, coarse-to-fine action generation, or action coherence guidance~\citep{jiang2025asyncvlaasynchronousflowmatching,tang2025vlashrealtimevlasfuturestateaware}.
In parallel, SpatialVLA~\citep{qu2025spatialvlaexploringspatialrepresentations} and GraphCoT-VLA~\citep{huang2025graphcotvla3dspatialawarereasoning} enhance spatial reasoning and embodiment-aware manipulation.

\subsection{World Models for Robotics}

World models provide a complementary direction for robot learning by predicting future environment states for simulation, planning, or policy improvement.
In robotic manipulation, action-conditioned video generation has been widely used to model robot-object dynamics.
IRASim~\citep{zhu2025irasimfinegrainedworldmodel} learns an interactive real-robot action simulator conditioned on observations and robot actions.
RoboMaster~\citep{fu2026learningvideogenerationrobotic} improves trajectory-controlled robotic video generation by modeling robot-object interactions, while Ctrl-World~\citep{guo2026ctrlworldcontrollablegenerativeworld} studies controllable multi-view world modeling for evaluating and improving generalist robot policies.

Recent works further introduce structured spatial representations for world modeling.
FlowDreamer~\citep{guo2025flowdreamerrgbdworldmodel} uses RGB-D observations and 3D scene flow for action-conditioned future prediction.
GAF~\citep{chai2025gafgaussianactionfield} and GWM~\citep{lu2025gwmscalablegaussianworld} represent dynamic manipulation scenes with Gaussian-based formulations, enabling future scene prediction and action refinement.
Dream2Flow~\citep{dharmarajan2025dream2flowbridgingvideogeneration} connects video generation with robot control by extracting 3D object flow from generated videos.
World4RL~\citep{jiang2026world4rldiffusionworldmodels} and PlayWorld~\citep{yin2026playworldlearningrobotworld} further use learned world models for policy evaluation, reinforcement learning, or policy refinement.

\section{Implementation Details}
\label{app:implementation_details}

\subsection{Trajectory-Wise Training}
\label{app:trajectory_wise_training}
Since \ours{} conditions action generation on causal history context, we use trajectory-wise batch construction during training.
At each optimization step, we randomly load one complete demonstration trajectory and compute its state representations and causal context representations following \Cref{eq:observation_encoder,eq:history_encoding}.
We then sample $B$ valid action-window start indices from this trajectory, where $B$ is the training batch size.

For each sampled start time $t_i$, we construct the hierarchical condition using \Cref{eqs:prefix_condition}, with the context representations and recent state representations available up to $t_i$.
The corresponding diffusion target is the future action chunk starting from $t_i$.
This ensures that every training sample is conditioned only on observations before its prediction time, without accessing future observations.

For fair comparison with window-based baselines, we keep the same effective batch size $B$ and comparable optimization budget across methods.
Thus, trajectory-wise training does not introduce additional demonstrations or a larger denoising batch; it only changes how causal history conditions are constructed for \ours{}.
The training objectives follow \Cref{eq:total_loss}.

\begin{table}[h]
\centering
\small
\setlength{\tabcolsep}{8pt}
\caption{Hyperparameters used for RoboTwin simulation experiments.}
\label{tab:robotwin_hyperparams}
\begin{tabular}{lc}
\toprule
Hyperparameter & Value \\
\midrule
Recent Observation Horizon ($N$) & 3 \\
Horizon ($H$) & 8 \\
Action Execution Horizon ($H_a$) & 6 \\
Expert Demonstrations per Task & 50 \\
Optimizer & AdamW \\
Betas $(\beta_1,\beta_2)$ & $(0.95, 0.999)$ \\
Learning Rate & $1.0\times 10^{-4}$ \\
Weight Decay & $1.0\times 10^{-6}$ \\
Diffusion Training Steps & 100 \\
Inference Steps & 10 \\
Learning Rate Scheduler & Cosine \\
Warmup Steps & 500 \\
Prediction Type & Sample prediction \\
Dynamics Loss Weight ($\lambda$) & 0.05 \\
Dynamics Loss Type & Cosine distance \\
Mamba Backbone Layers & 8 \\
Mamba History Encoder Layers & 2 \\
Hidden Dimension & 512 \\
SSM State Dimension & 64 \\
\bottomrule
\end{tabular}
\end{table}

\begin{table}[h]
\centering
\small
\setlength{\tabcolsep}{8pt}
\caption{Hyperparameters used for Adroit simulation experiments.}
\label{tab:adroit_hyperparams}
\begin{tabular}{lc}
\toprule
Hyperparameter & Value \\
\midrule
Recent Observation Horizon ($N$) & 2 \\
Horizon ($H$) & 4 \\
Action Execution Horizon ($H_a$) & 3 \\
Expert Demonstrations per Task & 10 \\
Optimizer & AdamW \\
Betas $(\beta_1,\beta_2)$ & $(0.95, 0.999)$ \\
Learning Rate & $1.0\times 10^{-4}$ \\
Weight Decay & $1.0\times 10^{-6}$ \\
Epoch & 3000 \\
Learning Rate Scheduler & Cosine \\
Warmup Steps & 500 \\
Diffusion Training Steps & 100 \\
Inference Steps & 10 \\
Prediction Type & Sample prediction \\
Dynamics Loss Weight ($\lambda$) & 0.05  \\
Dynamics Loss Type & Cosine distance \\
Mamba Backbone Layers & 8 \\
Mamba History Encoder Layers & 2 \\
Hidden Dimension & 512 \\
SSM State Dimension & 64 \\
Evaluation Episodes & 20 \\
Maximum Evaluation Steps & 300 \\
\bottomrule
\end{tabular}
\end{table}

\begin{table}[h]
\centering
\small
\setlength{\tabcolsep}{8pt}
\caption{Hyperparameters used for MetaWorld simulation experiments.}
\label{tab:metaworld_hyperparams}
\begin{tabular}{lc}
\toprule
Hyperparameter & Value \\
\midrule
Recent Observation Horizon ($N$) & 2 \\
Horizon ($H$) & 4 \\
Action Execution Horizon ($H_a$) & 3 \\
Expert Demonstrations per Task & 10 \\
Optimizer & AdamW \\
Betas $(\beta_1,\beta_2)$ & $(0.95, 0.999)$ \\
Learning Rate & $1.0\times 10^{-4}$ \\
Weight Decay & $1.0\times 10^{-6}$ \\
Epoch & 1000 \\
Learning Rate Scheduler & Cosine \\
Warmup Steps & 500 \\
Diffusion Training Steps & 100 \\
Inference Steps & 10 \\
Prediction Type & Sample prediction \\
Dynamics Loss Weight ($\lambda$) & 0.05 \\
Dynamics Loss Type & Cosine distance \\
Mamba Backbone Layers & 8 \\
Mamba History Encoder Layers & 2 \\
Hidden Dimension & 512 \\
SSM State Dimension & 64 \\
Evaluation Episodes & 20 \\
Maximum Evaluation Steps & 1000 \\
\bottomrule
\end{tabular}
\end{table}

\subsection{Hyperparameters}
\label{app:robotwin_hyperparameters}

To account for the varying difficulty levels and unique characteristics of different benchmarks, we tailor our hyperparameter configurations to each individual dataset. The final settings, which include additional configurations for the Mamba-based history encoder, Mamba diffusion backbone, and dynamics-aware auxiliary objective and are summarized in Tables \ref{tab:robotwin_hyperparams},\ref{tab:adroit_hyperparams},\ref{tab:metaworld_hyperparams}, are informed by established practices in prior literature \citep{ze_3d_2024,zhang2025flowpolicy,chen_robotwin_2025}. 

\begin{table}[h]
\centering
\small
\setlength{\tabcolsep}{5pt}
\renewcommand{\arraystretch}{1.25}
\caption{RoboTwin task grouping by average episode length. Short, mid, and long horizons correspond to average episode lengths of $<150$, $150$--$250$, and $>250$ steps, respectively.}
\label{tab:robotwin_horizon_groups}
\begin{tabular}{p{0.16\linewidth}p{0.78\linewidth}}
\toprule
Horizon Group & Tasks \\
\midrule
Long (17) &
Put Bottles Dustbin, Open Microwave, Stack Blocks Three, Stack Bowls Three, Blocks Ranking Rgb, Blocks Ranking Size, Hanging Mug, Stack Blocks Two, Stack Bowls Two, Place Cans Plasticbox, Handover Block, Shake Bottle Horizontally, Put Object Cabinet, Dump Bin Bigbin, Open Laptop, Place Can Basket, Place Object Basket \\
\midrule
Mid (15) &
Shake Bottle, Place Burger Fries, Place Bread Basket, Place Dual Shoes, Handover Mic, Place Shoe, Place Empty Cup, Scan Object, Place Bread Skillet, Place Container Plate, Place A2B Left, Rotate Qrcode, Move Stapler Pad, Stamp Seal, Move Can Pot \\
\midrule
Short (18) &
Place Mouse Pad, Place Fan, Adjust Bottle, Move Pillbottle Pad, Place Object Scale, Place A2B Right, Press Stapler, Place Object Stand, Place Phone Stand, Pick Dual Bottles, Pick Diverse Bottles, Move Playingcard Away, Beat Block Hammer, Lift Pot, Turn Switch, Grab Roller, Click Bell, Click Alarmclock \\
\bottomrule
\end{tabular}
\end{table}
\subsection{RoboTwin Horizon Grouping}
\label{app:robotwin_horizon_grouping}

For horizon-wise analysis, we partition the 50 RoboTwin tasks according to their average episode length. Tasks with average length below 150 steps are categorized as short-horizon tasks, tasks between 150 and 250 steps are categorized as mid-horizon tasks, and tasks above 250 steps are categorized as long-horizon tasks. The resulting groups are listed in Table~\ref{tab:robotwin_horizon_groups}.

For ablation and diagnostic analysis, we further define a six-task long-horizon analysis subset from the above grouping.
This subset is selected from tasks with long execution horizons and temporally dependent manipulation behaviors.
The purpose of this subset is not to replace the full benchmark evaluation, but to provide a controlled set of tasks for studying how different architectural components affect history utilization.
Unless otherwise specified, all ablation studies and history-utilization analyses are conducted on this subset.
The selected tasks and their task-level results are shown in 
Table~\ref{tab:long_horizon_subset_details}.
\begin{table*}[h]
\caption{Six-task history-sensitive analysis subset used for ablation and diagnostic studies. 
The table reports task-level success rates under the in-distribution (\textit{demo clean}) evaluation setting.}
\centering
\small
\setlength{\tabcolsep}{6pt}
\begin{tabular}{lccc}
\toprule
Task & Avg. Steps & DP3 (\%) & \ours (\%) \\
\midrule
Put Bottles Dustbin & 637 & 60 & 83 \\
Stack Bowls Three & 476 & 57 & 80 \\
Put Object Cabinet & 274 & 72 & 70 \\
Place Dual Shoes & 228 & 13 & 19 \\
Stack Bowls Two & 313 & 83 & 93 \\
Place Can Basket & 255 & 67 & 83 \\
\midrule
Average & 364 & 58.67 & 71.33 \\
\bottomrule
\end{tabular}
\label{tab:long_horizon_subset_details}
\end{table*}
\section{Additional Results and Failure-Mode Analysis}
\label{app:additional_results_failure_analysis}

\subsection{Per-task RoboTwin results.}
\Cref{tab:appendix_robotwin_dp3_vs_ours_50task} reports the per-task success rates of DP3 and \ours{} on all 50 RoboTwin tasks under the clean setting.
Overall, \ours{} improves the average success rate from 55.24\% to 62.30\%, with particularly large gains on tasks requiring sequential progress tracking, such as Open Microwave, Place Cans Plasticbox, Put Bottles Dustbin, and Stack Bowls Three.
\begin{longtable}{lrr}
\caption{Per-task success rate comparison between RoboTwin DP3 and \ours{} on 50 RoboTwin tasks under the clean setting.}
\label{tab:appendix_robotwin_dp3_vs_ours_50task}\\
\toprule
Task & DP3 & \ours{} \\
\midrule
\endfirsthead
\toprule
Task & DP3 & \ours{} \\
\midrule
\endhead
Adjust Bottle & \textbf{99.00} & 96.00 \\
Beat Block Hammer & 72.00 & \textbf{79.00} \\
Blocks Ranking RGB & 3.00 & \textbf{6.00} \\
Blocks Ranking Size & 2.00 & \textbf{4.00} \\
Click Alarmclock & 77.00 & \textbf{99.00} \\
Click Bell & 90.00 & \textbf{100.00} \\
Dump Bin Bigbin & \textbf{85.00} & 84.00 \\
Grab Roller & \textbf{98.00} & \textbf{98.00} \\
Handover Block & 70.00 & \textbf{95.00} \\
Handover Mic & \textbf{100.00} & 93.00 \\
Hanging Mug & 17.00 & \textbf{24.00} \\
Lift Pot & \textbf{97.00} & 96.00 \\
Move Can Pot & 70.00 & \textbf{86.00} \\
Move Pillbottle Pad & 41.00 & \textbf{58.00} \\
Move Playingcard Away & 68.00 & \textbf{71.00} \\
Move Stapler Pad & 12.00 & \textbf{16.00} \\
Open Laptop & 82.00 & \textbf{88.00} \\
Open Microwave & 61.00 & \textbf{97.00} \\
Pick Diverse Bottles & 52.00 & \textbf{53.00} \\
Pick Dual Bottles & 60.00 & \textbf{66.00} \\
Place A2B Left & \textbf{46.00} & 40.00 \\
Place A2B Right & 49.00 & \textbf{52.00} \\
Place Bread Basket & 26.00 & \textbf{29.00} \\
Place Bread Skillet & 19.00 & \textbf{39.00} \\
Place Burger Fries & 72.00 & \textbf{81.00} \\
Place Can Basket & 67.00 & \textbf{83.00} \\
Place Cans Plasticbox & 48.00 & \textbf{88.00} \\
Place Container Plate & 86.00 & \textbf{95.00} \\
Place Dual Shoes & 13.00 & \textbf{19.00} \\
Place Empty Cup & 65.00 & \textbf{86.00} \\
Place Fan & 36.00 & \textbf{40.00} \\
Place Mouse Pad & \textbf{4.00} & \textbf{4.00} \\
Place Object Basket & \textbf{65.00} & 62.00 \\
Place Object Scale & \textbf{15.00} & 10.00 \\
Place Object Stand & 60.00 & \textbf{61.00} \\
Place Phone Stand & 44.00 & \textbf{60.00} \\
Place Shoe & \textbf{58.00} & 49.00 \\
Press Stapler & \textbf{69.00} & 66.00 \\
Put Bottles Dustbin & 60.00 & \textbf{83.00} \\
Put Object Cabinet & \textbf{72.00} & 70.00 \\
Rotate QRcode & \textbf{74.00} & 66.00 
\\
\midrule
Scan Object & \textbf{31.00} & 29.00 \\
Shake Bottle & 98.00 & \textbf{100.00} \\
Shake Bottle Horizontally & \textbf{100.00} & \textbf{100.00} \\
Stack Blocks Three & 1.00 & \textbf{2.00} \\
Stack Blocks Two & 24.00 & \textbf{30.00} \\
Stack Bowls Three & 57.00 & \textbf{80.00} \\
Stack Bowls Two & 83.00 & \textbf{93.00} \\
Stamp Seal & 18.00 & \textbf{32.00} \\
Turn Switch & 46.00 & \textbf{57.00} \\
\midrule
\textbf{Average} & 55.24 & \textbf{62.30} \\
\bottomrule
\end{longtable}

\subsection{Failure-Mode Analysis on Real-World Tasks}
\label{app:real_world_failure_analysis}

We further analyze the failure modes of each real-world task to better understand where the gains of \ours{} come from.

\noindent\textbf{\textit{Morse Tapping.}}
\ours{} achieves the largest relative improvement on this task.
The policy is required to tap the target three times before returning, which demands accurate tracking of the number of completed taps.
However, the visual observation before each tap is nearly identical, making the task progress ambiguous when only short-term context is available.
As a result, DP3 often stops at an incorrect stage or performs redundant taps.
By maintaining a history context, \ours{} better tracks the tapping progress and executes the correct number of taps.
The remaining failures of \ours{} mainly come from inaccurate target localization, which can cause missed or imprecise contacts.

\noindent\textbf{\textit{Put Bottles.}}
This long-horizon task requires the robot to sequentially place multiple bottles into the basket.
A common failure mode of DP3 is premature termination after placing only one bottle.
This happens because intermediate states, where one bottle has already been placed, can appear visually similar to the final completion state when the policy only observes a short recent context.
With historical context, \ours{} better infers the overall task progress and distinguishes intermediate subgoals from true task completion.
Its failures are more often caused by local manipulation errors, such as inaccurate grasping or placement, rather than losing track of the task stage.

\noindent\textbf{\textit{Object Swap.}}
This task requires swapping two objects through an intermediate buffer slot, with demonstrations collected in both swap directions.
When an object is located in the buffer, the current observation alone is insufficient to determine whether it should be moved to the left or to the right.
Consequently, DP3 may move the object back to its original location, leading to a reversal of progress.
In contrast, \ours{} uses the historical context to infer the previous movement direction and resolve this ambiguity.
This allows the policy to maintain consistent task progress across visually aliased intermediate states.

\section{Theoretical Analysis}
\label{app:theoretical_analysis}
We provide the formal proofs for the propositions presented in the main text regarding the theoretical advantages of history conditioning in imitation learning. 
Our analysis is grounded in the framework of Partially Observable Markov Decision Processes (POMDPs), where we demonstrate that the interaction history $h_t$ serves as a sufficient statistic for the underlying belief state. 
We establish two primary results: first, a safety guarantee showing that conditioning on history never increases the theoretical minimum loss (\cref{method:proposition1}); 
and second, a proof of performance gain showing that history conditioning strictly improves performance in environments subject to observation aliasing (\cref{method:proposition2}). 
These derivations utilize the law of total variance and information-theoretic principles to quantify the performance gap between reactive and history-dependent policies.

\subsection{Adding history information does not hurt the performance.}
\begin{proposition}
\label{proposition1}
The minimum achievable diffusion-based imitation loss for a history-conditioned policy is always less than or equal to that of an reactive policy. That is,
\begin{equation}
    \min_{\theta} \mathcal{L}_{\mathrm{diff}}(\pi_\theta; h_t) \leq \min_{\theta} \mathcal{L}_{\mathrm{diff}}(\pi_\theta; o_t).
\end{equation}
\end{proposition}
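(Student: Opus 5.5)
The plan is to prove the inequality by a function-class inclusion (embedding) argument, backed by a Bayes-optimal decomposition of the $x_0$-prediction loss. Write the diffusion loss of \cref{eq:diffusion_loss} with conditioning variable $X_t$ as
\[
\mathcal{L}_{\mathrm{diff}}(\pi_\theta; X_t)=\mathbb{E}\bigl[\|f_\theta(x_\tau,\tau,X_t)-x_0\|_2^2\bigr].
\]
The key structural fact is that $o_t$ is a deterministic function of $h_t$: it is the last entry of $h_t=(o_0,a_0,\dots,a_{t-1},o_t)$. So every reactive denoiser $f(x_\tau,\tau,o_t)$ is also a history-conditioned denoiser $\tilde f(x_\tau,\tau,h_t):=f(x_\tau,\tau,\mathrm{last}(h_t))$, and it has identical loss, since both losses are expectations over the same joint distribution of $(h_t,x_0,\tau,\epsilon)$ induced by $\mathcal{D}_E$.

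The steps, in order, are as follows.

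\textbf{Step 1.} Fix the expert joint distribution. Under it, $x_0=\mathbf{a}_{t:t+H_a-1}$ and $x_\tau=\sqrt{\bar\alpha_\tau}x_0+\sqrt{1-\bar\alpha_\tau}\,\epsilon$ are generated identically regardless of which conditioning variable is fed to the network.

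\textbf{Step 2.} State the realizability assumption. The history-conditioned class must contain the composition of any reactive network with the projection $h_t\mapsto o_t$. This holds either in the nonparametric idealization, where $\min_\theta$ is over all measurable $f$, or architecturally, since the SSM prefix $C_t$ contains $z_t=E_{\mathrm{obs}}(o_t)$ and can ignore $c_t$.

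\textbf{Step 3.} Derive the inequality. From the inclusion, $\{\mathcal{L}(\cdot;o_t)\}\subseteq\{\mathcal{L}(\cdot;h_t)\}$ as sets of achievable loss values. The infimum over a larger set is no larger, which gives the claim.

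\textbf{Step 4 (optional sharpening).} To connect to the variance language used later for \cref{method:proposition2}, identify the minimizers as conditional expectations. These are $f^*_h=\mathbb{E}[x_0\mid x_\tau,\tau,h_t]$ and $f^*_o=\mathbb{E}[x_0\mid x_\tau,\tau,o_t]$, with minimal losses $\mathbb{E}[\mathrm{Var}(x_0\mid x_\tau,\tau,h_t)]$ and $\mathbb{E}[\mathrm{Var}(x_0\mid x_\tau,\tau,o_t)]$. Since $\sigma(x_\tau,\tau,o_t)\subseteq\sigma(x_\tau,\tau,h_t)$, the law of total variance (equivalently, the tower property together with Jensen) gives
\[
\mathbb{E}\bigl[\mathrm{Var}(x_0\mid x_\tau,\tau,o_t)\bigr]=\mathbb{E}\bigl[\mathrm{Var}(x_0\mid x_\tau,\tau,h_t)\bigr]+\mathbb{E}\bigl\|f^*_h-f^*_o\bigr\|^2 .
\]
This also exposes the gap term needed for the strict version.

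The main obstacle is not the inequality itself but making $\min_\theta$ meaningful. With a fixed finite parametric family, the history-conditioned architecture need not literally contain every reactive network. I would handle this by stating the result for the Bayes-optimal (unrestricted) function class, which requires finite second moments of the actions so the conditional expectations exist. I would then remark that it transfers to DSSP whenever the denoiser can route $z_t$ through and suppress $c_t$, for example by zeroing the relevant input projections.
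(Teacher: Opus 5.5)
Your proof is correct, and its core (Steps 1--3) takes a different route from the paper's. The paper uses no class-inclusion argument. It asserts that $\min_\theta$ equals the Bayes risk $\mathbb{E}[\mathrm{Var}(a_t\mid X)]$, applies the law of total variance to $\mathrm{Var}(a_t\mid o_t)$ using $o_t\subset h_t$, drops the nonnegative ``aliasing penalty'' $\mathrm{Var}(\mathbb{E}[a_t\mid h_t]\mid o_t)$, and averages with the tower property.

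Your lifting argument, $\tilde f(x_\tau,\tau,h_t)=f(x_\tau,\tau,\mathrm{last}(h_t))$, needs neither the form of the minimizer nor the squared loss. It holds for any loss and any hypothesis class that contains these lifts. It also states openly the realizability assumption that the paper uses silently when it replaces $\min_\theta$ by a conditional variance.

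In exchange, the paper's route yields an explicit gap term, which it reuses for \cref{method:proposition2}. Your Step 4 recovers this, and in a more accurate form. Because the denoiser also sees $x_\tau$ and $\tau$, the minimal $x_0$-prediction loss in \cref{eq:diffusion_loss} is $\mathbb{E}[\mathrm{Var}(x_0\mid x_\tau,\tau,X)]$ over the whole action chunk. The paper's $\mathbb{E}[\mathrm{Var}(a_t\mid X)]$ ignores $x_\tau$ and uses a single action; it is the Bayes risk of plain MSE regression, not of the diffusion objective. The same total-variance argument goes through once the conditioning is corrected, which is exactly your Step 4, and $\mathbb{E}\|f^*_h-f^*_o\|^2$ is the right gap on which to build the strict statement.

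One small caution concerns your architectural aside, which is looser than the nonparametric version. To emulate a reactive denoiser inside DSSP, you must neutralize the older recent tokens $z_{t-N+1},\dots,z_{t-1}$ as well as $c_t$. With position-shared Mamba weights, ``zeroing input projections'' for particular tokens is not directly available. Resting the proof on the unrestricted class, as you propose, is the clean choice.
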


\begin{proof}
For notational brevity, we denote the minimum achievable loss for a given conditioning variable $X$ as $\mathcal{L}^*(X) = \underset{\theta}{\min}\mathcal{L}_{\mathrm{diff}}(\pi_\theta; X)$.
Consider the diffusion-based imitation loss defined in Eq. \ref{eq:diffusion_loss}. 
For a conditioning variable $X$, the minimum achievable Mean Squared Error (MSE) loss is the expected conditional variance of the expert action $a_t$ calculated over the expert dataset $\mathcal{D}_E$:
\begin{equation}
    \mathcal{L}^*(X) = \mathbb{E}_{(X, a_t) \sim \mathcal{D}_E} [\text{Var}(a_t \mid X)].
\end{equation}
Specifically, we denote the optimal losses for reactive and history-conditioned policies as:
\begin{equation}
    \mathcal{L}^*(o_t) = \mathbb{E}_{o_t \sim \mathcal{D}_E} [\text{Var}(a_t \mid o_t)] \quad \text{and} \quad \mathcal{L}^*(h_t) = \mathbb{E}_{h_t \sim \mathcal{D}_E} [\text{Var}(a_t \mid h_t)].
\end{equation}
By definition, the history $h_t$ contains the current observation $o_t$ as its final element ($o_t \subset h_t$). 
This inclusion allows us to apply the law of total variance to decompose the variance of the action $a_t$ given $o_t$ as:
\begin{equation}
    \text{Var}(a_t \mid o_t) = \mathbb{E}_{h_t} [\text{Var}(a_t \mid h_t) \mid o_t] + \text{Var}_{h_t}(\mathbb{E}[a_t \mid h_t] \mid o_t).
\end{equation}
Intuitively, the first term represents the inherent uncertainty that remains even if we know the full history, which is also the minimum possible error of a history-conditioned policy.
The second term represents the aliasing penalty, which measures how much the average expert action changes depending on which history led to the current observation.
Since the variance of the conditional expectation (the second term) is non-negative, it follows that $\text{Var}(a_t \mid o_t) \geq \mathbb{E}_{h_t} [\text{Var}(a_t \mid h_t) \mid o_t]$. 

To find the total loss, we take the expectation over the entire expert distribution $\mathcal{D}_E$:
\begin{equation}
    \mathbb{E}_{o_t \sim \mathcal{D}_E} [\text{Var}(a_t \mid o_t)] \geq \mathbb{E}_{o_t \sim \mathcal{D}_E} [\mathbb{E}_{h_t \sim \mathcal{D}_E} [\text{Var}(a_t \mid h_t) \mid o_t]].
\end{equation}
By the law of iterated expectations, the right-hand side simplifies to the total average variance over the dataset:
\begin{equation}
    \mathbb{E}_{o_t \sim \mathcal{D}_E} [\text{Var}(a_t \mid o_t)] \geq \mathbb{E}_{h_t \sim \mathcal{D}_E} [\text{Var}(a_t \mid h_t)],
\end{equation}
which is equivalent to $\mathcal{L}^*(h_t) \leq \mathcal{L}^*(o_t)$. 
This proves that the history-conditioned objective never exceeds the observation-only objective when averaged over the expert demonstrations.
\end{proof}

\subsection{Adding history information can improve the performance.}
\begin{proposition}
\label{proposition2}
In the presence of observation aliasing, history conditioning strictly reduces the minimum achievable imitation loss compared to an reactive policy:
\begin{equation}
    \min_{\theta} \mathcal{L}_{\mathrm{diff}}(\pi_\theta; h_t) < \min_{\theta} \mathcal{L}_{\mathrm{diff}}(\pi_\theta; o_t), \quad \text{whenever} \quad I_E(a_t; h_t \mid o_t) > 0.
\end{equation}
This condition implies that history resolves state ambiguity by capturing mutual information that is inaccessible to a reactive policy.
\end{proposition}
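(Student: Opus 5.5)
The plan is to reuse the variance decomposition from the proof of \cref{proposition1} and show that the nonnegative "aliasing penalty" term is strictly positive whenever $I_E(a_t; h_t \mid o_t) > 0$.

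\textbf{Step 1: locate the gap.} Write $\mathcal{L}^*(X)$ as before, but keep the diffusion structure of \cref{eq:diffusion_loss}. For each noise level $\tau$, the optimal predictor of the clean chunk $x_0$ is $\mathbb{E}[x_0 \mid x_\tau, \tau, X]$. The minimum loss is therefore
\[
\mathcal{L}^*(X) = \mathbb{E}_\tau\, \mathbb{E}\bigl[\mathrm{Var}(x_0 \mid x_\tau, X)\bigr].
\]
Apply the law of total variance as in \cref{proposition1}, now conditioning on $(x_\tau, o_t)$ versus the finer $(x_\tau, h_t)$; the inclusion $o_t \subset h_t$ still holds. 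This gives
\[
\mathcal{L}^*(o_t) - \mathcal{L}^*(h_t) = \mathbb{E}_\tau\, \mathbb{E}\Bigl[\mathrm{Var}\bigl(\mathbb{E}[x_0 \mid x_\tau, h_t] \,\big|\, x_\tau, o_t\bigr)\Bigr] \ge 0 .
\]
So strictness holds unless, for almost every $\tau$, $\mathbb{E}[x_0 \mid x_\tau, h_t] = \mathbb{E}[x_0 \mid x_\tau, o_t]$ almost surely. I argue by contradiction: assume this equality holds.

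\textbf{Step 2: this is the main obstacle.} Equality of conditional means does not in general imply equality of conditional distributions. This is exactly why a naive argument at $\tau=0$ fails: $I_E > 0$ can come from higher moments while the means coincide. The way around it is to use all noise levels. By Tweedie's formula,
\[
\mathbb{E}[x_0 \mid x_\tau, X] = \frac{x_\tau + \sigma_\tau^2 \nabla_{x_\tau} \log p_\tau(x_\tau \mid X)}{\sqrt{\bar\alpha_\tau}},
\]
where $p_\tau(\cdot \mid X)$ is the Gaussian-smoothed conditional law of $x_0$. Equality of the posterior means then forces equality of the scores $\nabla \log p_\tau(\cdot \mid h_t) = \nabla \log p_\tau(\cdot \mid o_t)$. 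Both are normalized densities, so in fact $p_\tau(\cdot \mid h_t) = p_\tau(\cdot \mid o_t)$. Gaussian convolution is injective on probability measures, since the Gaussian characteristic function never vanishes. Hence $P_E(x_0 \mid h_t) = P_E(x_0 \mid o_t)$ almost surely. A single $\tau > 0$ with the equality already suffices.

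\textbf{Step 3: connect to mutual information.} The chunk $x_0 = \mathbf{a}_{t:t+H_a-1}$ contains $a_t$, so the chain rule (data processing) gives $I_E(x_0; h_t \mid o_t) \ge I_E(a_t; h_t \mid o_t) > 0$. Therefore $P_E(x_0 \mid h_t) \ne P_E(x_0 \mid o_t)$ on a set of positive probability, which contradicts Step 2. Hence the gap in Step 1 is strictly positive, proving the claim.

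I would also remark that if one instead reads $\mathcal{L}_{\mathrm{diff}}$ as the plain MSE of \cref{proposition1} with no noise levels, the statement needs the stronger hypothesis that aliasing changes the conditional mean, namely $\mathbb{E}[a_t \mid h_t] \ne \mathbb{E}[a_t \mid o_t]$ with positive probability. The Tweedie/injectivity argument is what lets the mutual-information condition suffice. Mild regularity (finite second moments of actions, differentiable smoothed densities) is assumed throughout.
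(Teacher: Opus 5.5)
Your proof is correct, and it takes a genuinely different and more careful route than the paper's.

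The paper stays in the plain-regression picture of \cref{proposition1}. It takes the optimal loss to be $\mathbb{E}[\mathrm{Var}(a_t\mid X)]$ and identifies the gap as $\mathbb{E}[\mathrm{Var}(\mathbb{E}[a_t\mid h_t]\mid o_t)]$. It then asserts that aliasing produces histories with $\mathbb{E}[a_t\mid h_t^1]\neq\mathbb{E}[a_t\mid h_t^2]$ at the same $o_t$, and argues in the direction ``aliasing implies $I_E(a_t;h_t\mid o_t)>0$''. That argument is short and elementary, but it only proves strictness under mean aliasing on a set of positive probability. The implication the statement needs, $I_E>0\Rightarrow$ positive gap, fails in that picture: two aliased histories with equal means but different spreads give $I_E>0$ and zero gap. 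That is exactly the caveat in your closing remark.

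You instead keep the noisy input $x_\tau$ that the denoiser in the diffusion loss actually receives. The Bayes-optimal predictor is then $\mathbb{E}[x_0\mid x_\tau,X]$, and the gap is the expected squared distance between the two denoisers. Tweedie's formula and the nonvanishing Gaussian characteristic function upgrade equal denoisers at a single noise level to equal conditional laws of the whole chunk. The inequality $I_E(x_0;h_t\mid o_t)\ge I_E(a_t;h_t\mid o_t)$ then gives the contradiction.

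Your route proves the proposition with its mutual-information hypothesis exactly as stated, and for the objective the paper actually trains. The price is heavier machinery plus some standing assumptions: regular conditional distributions, finite second moments, some step with $0<\bar\alpha_\tau<1$, and, as in the paper, that $\min_\theta$ attains the Bayes-optimal predictor.

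One detail is worth writing out. The score identity holds only almost surely in $(x_\tau,h_t)$. Gaussian-smoothed densities are strictly positive and smooth, so it extends to every $x\in\mathbb{R}^d$ for almost every $h_t$. Connectedness then gives a constant log-ratio, and normalization gives equality.
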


\begin{proof}
Recall the decomposition from the law of total variance:
\begin{equation}
    \text{Var}(a_t \mid o_t) = \mathbb{E}[\text{Var}(a_t \mid h_t) \mid o_t] + \text{Var}(\mathbb{E}[a_t \mid h_t] \mid o_t).
\end{equation}
The gap between the optimal observation-only loss and the history-conditioned loss is determined by the second term, $\text{Var}(\mathbb{E}[a_t \mid h_t] \mid o_t)$, which represents the variance of the expert's conditional mean across different histories that share the same current observation.

Under the condition of observation aliasing, there exist histories such that $\mathbb{E}[a_t \mid h_t^1] \neq \mathbb{E}[a_t \mid h_t^2]$ for the same observation $o_t = o$. 
Because the conditional expectation $\mathbb{E}[a_t \mid h_t]$ is not constant given $o_t$, its variance is strictly positive:
\begin{equation}
    \text{Var}(\mathbb{E}[a_t \mid h_t] \mid o_t) > 0.
\end{equation}
This variance reduction is fundamentally linked to the conditional mutual information $I_E(a_t; h_t \mid o_t)$. Formally, this is defined as the reduction in the expert's action entropy when conditioned on history:
\begin{equation}
    I_E(a_t; h_t \mid o_t) = H_E(a_t \mid o_t) - H_E(a_t \mid h_t),
\end{equation}
where $H_E(\cdot \mid \cdot)$ denotes the conditional entropy. This term quantifies the additional information about the expert action $a_t$ contained in the full history $h_t$ that is not present in the current observation $o_t$ alone. 

When observation aliasing exists, the expert's action depends on the past context; therefore, $a_t$ and $h_t$ are not conditionally independent given $o_t$, leading to $I_E(a_t; h_t \mid o_t) > 0$. 
Consequently, an reactive policy $\pi_\theta(a_t \mid o_t)$ must average over these conflicting expert behaviors, resulting in a strictly higher Bayes risk. 
In contrast, a history-conditioned policy $\pi_\theta(a_t \mid h_t)$ utilizes the additional information to disambiguate the states, thereby achieving a strictly lower imitation loss.
\end{proof}
\section{Limitations}
\label{limitaions}
DSSP is primarily designed to improve temporal disambiguation and task-progress tracking through full-history conditioning. It does not directly eliminate low-level perception or control failures, such as inaccurate target localization, grasping, or placement, which remain a source of real-world failure. Moreover, our real-world evaluation is limited to three tabletop tasks with a fixed-camera point-cloud setup. Extending DSSP to more diverse embodiments, viewpoints, deformable objects, and dynamic environments remains an important direction for future work.







\end{document}